\pdfoutput=1

\documentclass[sigconf,authorversion,screen]{acmart}

\AtBeginDocument{%
  \providecommand\BibTeX{{%
    \normalfont B\kern-0.5em{\scshape i\kern-0.25em b}\kern-0.8em\TeX}}}


\copyrightyear{2021}
\acmYear{2021}
\setcopyright{acmlicensed}
\acmConference[SIGIR '21]{Proceedings of the 44th International ACM SIGIR Conference on Research and Development in Information Retrieval}{July 11--15, 2021}{Virtual Event, Canada}
\acmBooktitle{Proceedings of the 44th International ACM SIGIR Conference on Research and Development in Information Retrieval (SIGIR '21), July 11--15, 2021, Virtual Event, Canada}
\acmPrice{15.00}
\acmDOI{10.1145/3404835.3463084}
\acmISBN{978-1-4503-8037-9/21/07}


\usepackage[utf8]{inputenc} 
\usepackage[T1]{fontenc}    
\usepackage[english]{babel}

\PassOptionsToPackage{linktocpage}{hyperref}
\usepackage{url}            
\usepackage{booktabs}       
\usepackage{microtype}      
\usepackage{siunitx}

\usepackage{amsthm}
\usepackage{amsmath}
\usepackage{array,graphicx}

\usepackage{algorithm}
\usepackage[noend]{algpseudocode}


\usepackage{mathtools} 
\usepackage{stmaryrd}
\usepackage{multirow}
\usepackage{makecell}

\usepackage{thmtools} 
\usepackage{thm-restate}
\usepackage{balance}





\usepackage{adjustbox}

\makeatletter

\newcommand*{\eifstartswith}{\@expandtwoargs\ifstartswith}
\newcommand*{\ifstartswith}[2]{%
  \if\@car#1.\@nil\@car#2.\@nil
    \expandafter\@firstoftwo
  \else
    \expandafter\@secondoftwo
  \fi}
\theoremstyle{definition}

\newcommand{\Algo}[1]{\textsc{#1}}

\renewcommand{\vec}[1]{\boldsymbol{#1}}

\newcommand{\bx}{\vec{x}}
\newcommand{\by}{\vec{y}}
\newcommand{\bq}{\vec{q}}
\newcommand{\tby}{\tilde{\vec{y}}}

\newcommand{\bh}{\vec{h}}

\newcommand{\ty}{\tilde{y}}

\newcommand{\calX}{\mathcal{X}}
\newcommand{\calY}{\mathcal{Y}}

\newcommand{\calQ}{\mathcal{Q}}
\newcommand{\calL}{\mathcal{L}}
\newcommand{\calH}{\mathcal{H}}

\newcommand{\calB}{\mathcal{B}}

\newcommand{\heta}{\hat{\eta}}
\newcommand{\teta}{\tilde{\eta}}
\newcommand{\hteta}{\hat{\tilde{\eta}}}
\newcommand{\hy}{\hat{y}}

\newcommand\R{\mathbb{R}}   

\newcommand{\pa}[1]{\mathrm{pa}(#1)}

\newcommand{\Path}[1]{\mathrm{Path}(#1)}

\newcommand{\prob}{\mathbf{P}}

\newcommand{\loss}{L}




\newcommand{\childs}[1]{\mathrm{Ch}(#1)}

\renewcommand{\root}{r}
\newcommand{\nodes}{V}

\newcommand{\tree}{T}
\newcommand{\leaves}{L}
\newcommand{\leafnode}{l}
\newcommand{\labels}{\calL}

\newcommand{\node}{v}










\newcommand{\eurlex}{EurLex-4K}
\newcommand{\amazoncatsmall}{AmazonCat-13K}

\newcommand{\wikiten}{Wiki10-31K}

\newcommand{\wikilshtc}{WikiLSHTC-325K}
\newcommand{\wikipedia}{WikipediaLarge-500K}
\newcommand{\amazon}{Amazon-670K}


\algnewcommand{\IIf}[1]{\State\algorithmicif\ #1\ \algorithmicthen}
\algnewcommand{\IElse}[1]{\State\algorithmicelse\ #1\ }
\algnewcommand{\IElseIf}[1]{\State\algorithmicelse \algorithmicif\ #1\  \algorithmicthen}
\algnewcommand{\EndIIf}{\unskip\ \algorithmicend\ \algorithmicif}
\algnewcommand{\IfThen}[2]{\State\algorithmicif\ #1\ \algorithmicthen\ #2\ }
\algnewcommand{\ForDo}[2]{\State\algorithmicfor\ #1\ \algorithmicdo\ #2\ }


\newcommand{\assert}[1]{\llbracket #1 \rrbracket}

\newcommand{\given}{\, | \,}

\DeclareMathOperator*{\argmin}{\arg \min}










\settopmatter{printacmref=true}
\begin{document}


\title{Propensity-scored Probabilistic Label Trees}


\author{Marek Wydmuch}
\affiliation{%
  \institution{Poznan University of Technology}
  \city{Poznan}
  \country{Poland}
}
\email{mwydmuch@cs.put.poznan.pl}

\author{Kalina Jasinska-Kobus}
\affiliation{%
  \institution{ML Research at Allegro.pl}
  \city{Poznan}
  \country{Poland}
}
\additionalaffiliation{
  \institution{Poznan University of Technology}
  \city{Poznan}
  \country{Poland}
}
\email{kjasinska@cs.put.poznan.pl}

\author{Rohit Babbar}
\affiliation{%
  \institution{Aalto University}
  \city{Helsinki}
  \country{Finland}
}
\email{rohit.babbar@aalto.fi}

\author{Krzysztof Dembczyński}
\authornotemark[1]
\affiliation{%
  \institution{Yahoo! Research}
  \city{New York}
  \country{USA}
}
\email{kdembczynski@cs.put.poznan.pl}



\begin{CCSXML}
<ccs2012>
<concept>
<concept_id>10010147.10010257.10010258.10010259.10010263</concept_id>
<concept_desc>Computing methodologies~Supervised learning by classification</concept_desc>
<concept_significance>500</concept_significance>
</concept>
</ccs2012>
\end{CCSXML}

\ccsdesc[500]{Computing methodologies~Supervised learning by classification}

\keywords{extreme classification, multi-label classification, propensity model, missing labels, probabilistic label trees, supervised learning, recommendation, tagging, ranking}

\begin{abstract}
Extreme multi-label classification (XMLC) refers to the task of tagging instances 
with small subsets of relevant labels coming from an extremely large set of all possible labels. 
Recently, XMLC has been widely applied to diverse web applications 
such as automatic content labeling, online advertising, or recommendation systems. 
In such environments, label distribution is often highly imbalanced, 
consisting mostly of very rare tail labels, 
and relevant labels can be missing.
As a remedy to these problems, 
the propensity model has been introduced and applied within several XMLC algorithms.
In this work, we focus on the problem of optimal predictions under this model
for probabilistic label trees, a popular approach for XMLC problems.
We introduce an inference procedure, based on the $A^*$-search algorithm, 
that efficiently finds the optimal solution,
assuming that all probabilities and propensities are known. 
We demonstrate the attractiveness of this approach in a wide empirical study on popular XMLC benchmark datasets. 

\end{abstract}

\maketitle


\section{Introduction}
\label{introduction}

Extreme multi-label classification (XMLC) is a supervised learning problem, where only a few labels from an enormous label space, reaching orders of millions, are relevant per data point. Notable examples of problems where XMLC framework can be effectively leveraged are tagging of text documents~\citep{Dekel_Shamir_2010}, content annotation for multimedia search~\citep{Deng_et_al_2011}, and diverse types of recommendation, including webpages-to-ads~\citep{Beygelzimer_et_al_2009b}, ads-to-bid-words~\citep{Agrawal_et_al_2013,Prabhu_Varma_2014}, users-to-items~\citep{Weston_et_al_2013, Zhuo_et_al_2020}, queries-to-items~\citep{Medini_et_al_2019}, or items-to-queries~\citep{Chang_et_al_2020}.
These practical applications impose new statistical challenges, including: 
1) long-tail distribution of labels---infrequent (tail) labels are much harder to predict than frequent (head) labels due to the data imbalance problem;  
2) missing relevant labels in learning data---since it is nearly impossible to check the whole set of labels when it is so large, and the chance for a label to be missing is higher for tail than for head labels~\citep{Jain_et_al_2016}.

Many XMLC models achieve good predictive performance by just focusing on head labels~\citep{Wei_Li_2018}. However, this is not desirable in many of the mentioned applications (e.g., recommendation and content annotation), 
where tail labels might be more informative.
To address this issue \citet{Jain_et_al_2016} proposed to evaluate XMLC models in terms of propensity-scored versions of popular measures (i.e., precision$@k$, recall$@k$, and nDCG$@k$). Under the propensity model, we assume that an assignment of a label to an example is always correct, but the supervision may skip some positive labels and leave them not assigned to the example with some probability (different for each label). 

In this work, we introduce the Bayes optimal inference procedure for propensity-scored precision$@k$ for probabilistic classifiers trained on observed data. While this approach can be easily applied to many classical models, we particularly show how to implement it for probabilistic label trees (\Algo{PLT}s)~\citep{Jasinska_et_al_2016}, an efficient and competitive approach to XMLC, being the core of many existing state-of-the-art algorithms
(e.g., \Algo{Parabel}~\citep{Prabhu_et_al_2018}, \Algo{extremeText}~\citep{Wydmuch_et_al_2018}, 
\Algo{Bonsai}~\citep{Khandagale_et_al_2019},
\Algo{AttentionXML}~\citep{You_et_al_2019}, \Algo{napkinXC}~\citep{Jasinska-Kobus_et_al_2020}, and \Algo{PECOS} that includes \Algo{XR-Linear}~\citep{Yu_et_al_2020} and \Algo{X-Transformers}~\citep{Chang_et_al_2020} methods).
We demonstrate that this approach achieves very competitive results in terms of statistical performance and running times.%


\section{Problem statement}
\label{sec:problem_statement}

In this section, we state the problem. We first define extreme multi-label classification (XMLC) and then the propensity model.

\subsection{Extreme multi-label classification}
\label{subsec:xmlc}

Let $\calX$ denote an instance space, and let $\labels = [m]$ be a finite set of $m$ class labels. 
We assume that an instance $\bx \in \calX$ is associated with a subset of labels $\labels_{\bx} \subseteq \calL$ 
(the subset can be empty); 
this subset is often called the set of \emph{relevant} or \emph{positive} labels,
while the complement $\labels \backslash \labels_{\bx}$ is considered as \emph{irrelevant} or \emph{negative} for $\bx$. 
We identify the set $\labels_{\bx}$ of relevant labels with the binary vector $\by = (y_1,y_2, \ldots, y_m)$, 
in which $y_j = 1 \Leftrightarrow j \in \labels_{\bx}$. 
By $\calY = \{0, 1\}^m$ we denote the set of all possible label vectors.
In the classical setting, we assume that observations $(\bx, \by)$ are generated independently and identically
according to a probability distribution $\prob(\bx, \by)$ defined on $\calX \times \calY$. 
Notice that the above definition concerns not only multi-label classification, but also multi-class (when $\|\by\|_1=1$) and $k$-sparse multi-label (when $\|\by\|_1\le k$) problems as special cases.
In case of XMLC we assume $m$ to be a large number 
(e.g., $\ge 10^5$), 
and $\|\by\|_1$ to be much smaller than $m$, $\|\by\|_1 \ll m$.%
\footnote{We use $[n]$ to denote the set of integers from $1$ to $n$, and $\|\bx\|_1$ to denote the $L_1$ norm of $\bx$.}

The problem of XMLC can be defined as finding a \emph{classifier} $\bh(\bx) = (h_1(\bx), h_2(\bx),\ldots, h_m(\bx))$, from a function class $\calH^m: \calX \rightarrow \mathbb{R}^m$, that minimizes the \emph{expected loss} or \emph{risk}:  
\begin{equation}
\loss_\ell(\bh) = \mathbb{E}_{(\bx,\by) \sim \prob(\bx,\by)} (\ell(\by, \bh(\bx))\,,
\end{equation}
where $\ell(\by, \hat{\by})$ is the  (\emph{task}) \emph{loss}.
The optimal classifier,  the so-called \emph{Bayes classifier},  for a given loss function $\ell$ is:
$
\bh^*_\ell = \argmin_{\bh}  \loss_\ell(\bh) \,.
$


\subsection{Propensity model}
\label{subsec:propensity_model}

In the case of XMLC, the real-world data may not follow the classical setting, which assumes that $(\bx, \by)$ are generated according to $\prob(\bx, \by)$.
As correct labeling (without any mistakes or noise) in case of an extremely large label set is almost impossible,
it is reasonable to assume that positive labels can be missing~\citep{Jain_et_al_2016}.
Mathematically, the model can be defined in the following way.
Let $\by$ be the original label vector associated with $\bx$. 
We observe, however, $\tby = (\ty_1, \ldots, \ty_m)$ such that:
\begin{equation}
\begin{array}{l l}
 \prob(\ty_j = 1 \given y_j = 1) = p_j\,, & \prob(\ty_j = 0 \given y_j = 1) = 1 - p_j \,,\\
 \prob(\ty_j = 1 \given y_j = 0) = 0  \,, &  \prob(\ty_j = 0 \given y_j = 0) = 1 \,, \\
\end{array}
\end{equation}
where $p_j \in [0,1]$ is the propensity of seeing a positive label when it is indeed positive.
All observations in both training and test sets do follow the above model.
The propensity does not depend on $\bx$. This means that for the observed 
conditional probability of label $j$,
%
we have:
\begin{equation}
\teta_j(\bx) = \prob(\ty_j = 1 \given \bx) =  p_j\prob(y_j = 1 \given \bx) = p_j \eta_j(\bx) \,.
\end{equation}
Let us denote the inverse propensity by $q_j$, i.e. $q_j = \frac{1}{p_j}$. 
Thus, the original 
conditional probability of label $j$ is given by:
\begin{equation}
\eta_j(\bx) = \prob(y_j = 1 \given \bx) =  q_j\prob(\ty_j = 1 \given \bx) = q_j \teta_j(\bx) \,.
\end{equation}

Therefore, we can appropriately adjust inference procedures of algorithms estimating $\teta_j(\bx)$ to act optimally under different propensity-scored loss functions.

\section{Bayes optimal decisions for Propensity-scored Precision@k}
\label{sec:bayes-optimal-decision}

\citet{Jain_et_al_2016} introduced propensity-scored variants of popular XMLC measures. For precision$@k$ it takes the form:
\begin{equation}
psp@k(\tilde \by, \bh_{@k}(\bx)) = \frac{1}{k} \sum_{j \in \hat \calL_{\bx}} q_j \assert{\tilde{y}_j = 1} \,,
\end{equation}
where $\hat \calL_{\bx}$ is a set of $k$ labels predicted by $\bh_{@k}$ for $\bx$.
Notice that precision$@k$ ($p@k$) is a special case of $psp@k$ if $q_j = 1$ for all $j$.

We define a loss function for propensity-scored precision@$k$ as $\ell_{psp@k} = - psp@k$. 
The conditional risk for $\ell_{psp@k}$ is then:
\begin{eqnarray*}
\loss_{psp@k}(\bh_{@k} \given \bx) & = & \mathbb{E}_{\tby} \ell_{psp@k}(\tby,\bh_{@k}(\bx)) \\
& = & - \sum_{\tby \in \calY} \prob(\tby \given \bx) \frac{1}{k} \sum_{j \in \hat \calL_{\bx}} q_j \assert{\tilde{y}_j = 1} \\
& = & - \frac{1}{k} \sum_{j \in \hat \calL_{\bx}} q_j \sum_{\tby \in \calY} \prob(\tby \given \bx) \assert{\tilde{y}_j = 1} \\
& = & - \frac{1}{k} \sum_{j \in \hat \calL_{\bx}} q_j \teta_j(\bx) \,.
\end{eqnarray*}

The above result shows that the Bayes optimal classifier for $psp@k$ is determined by the 
conditional probabilities of labels scaled by the inverse of the label propensity. Given that the propensities or their estimates are given in the time of prediction, $psp@k$ is optimized by selecting $k$ labels with the highest values of $q_j \teta_j(\bx)$. 


\section{Propensity-scored Probabilistic label tress}

Conditional probabilities of labels can be estimated using many types of multi-label classifiers, such as decision trees, k-nearest neighbors, or binary relevance (\Algo{BR}) trained with proper composite surrogate losses, e.g., squared error, squared hinge, logistic or exponential loss~\citep{Zhang_2004, Agarwal_2014}.
For such models, where estimates of $\teta_j(\bx)$ are available for all $j \in \labels$, application of the Bayes decision rule for propensity-scored measures is straightforward. However, in many XMLC applications, calculating the full set of 
conditional probabilities is not feasible. 
In this section, we introduce an algorithmic solution of applying the Bayes decision rule for $psp@k$ to probabilistic label trees (\Algo{PLT}s).

\subsection{Probabilistic labels trees (\Algo{PLT}s)}

We denote a tree by $\tree$, a set of all its nodes by $\nodes_T$, a root node by $\root_T$, and the set of leaves by $\leaves_T$. The leaf $\leafnode_j \in \leaves_T$ corresponds to the label $j \in \labels$. The parent node of $v$ is denoted by $\pa{\node}$, and the set of child nodes by $\childs{\node}$. The set of leaves of a (sub)tree rooted in node $v$ is denoted by $\leaves_v$, and path from node $v$ to the root by $\Path{\node}$. 

A \Algo{PLT} uses tree $T$ to factorize 
conditional probabilities of labels, $\eta_j(x) = \prob(y_j = 1 \vert \bx)$, $j \in \calL$, by using the chain rule. Let us define an event that $\calL_{\bx}$ contains at least one relevant label in $\leaves_{\node}$: $z_v = (|\{j : \leafnode_j \in \leaves_v\} \cap \labels_{\bx}| > 0)$. Now for every node $v \in \nodes_T$, 
the conditional probability of containing at least one relevant label is given by:
\begin{equation}
\prob(z_v = 1|\bx) = \eta_v(\bx) = \prod_{v' \in \Path{v}} \eta(\bx, v') \,,
\label{eqn:plt-factorization-prediction}
\end{equation}
where $\eta(\bx, v) = \prob(z_v = 1 | z_{\pa{v}} = 1, \bx)$ for non-root nodes, and $\eta(\bx, v) = \prob(z_v = 1 \given \bx)$ for the root. Notice that (\ref{eqn:plt-factorization-prediction}) can also be stated as recursion:
\begin{equation}
\eta_v(\bx) = \eta(\bx, v) \eta_{\pa{v}}(\bx) \,,
\label{eqn:plt-estimates-factorization-recursion}
\end{equation}
and that for leaf nodes we get the 
conditional probabilities of labels: 
\begin{equation}
\eta_{\leafnode_j}(\bx) = \eta_j(\bx) \,, \quad \textrm{for~} l_j \in L_T \,.
\label{eqn:plt_leaf_prob}
\end{equation}

To obtain a \Algo{PLT}, it suffices for a given $T$ to train probabilistic classifiers from $\calH : \R^d \mapsto [0,1]$, 
estimating $\eta(\bx, v)$ for all $v \in V_T$. We denote estimates of $\eta$ by $\heta$.
We index this set of classifiers by the elements of $V_T$ as $H = \{ \heta(v) \in \calH : v \in V_T \}$.

\subsection{Plug-in Bayes optimal prediction \Algo{PLT}s}

\begin{algorithm*}[!ht]
\caption{\Algo{PS-PLT.PredictTopLabels}$(T, H, \bq, \bx, k)$}
\label{alg:ps-plt-prediction}
\begin{small}
\begin{algorithmic}[1] 
\State $\hat\by = \vec{0}$, $q_{\max} = \max_{j \in \labels} q_j$, $\calQ = \emptyset$,  \Comment{Initialize prediction $\hat\by$ vector to all zeros, $q_{\max}$ and a priority queue $\calQ$, ordered ascending by $\hat f(v, \bx)$}

\State $g(r_T, \bx) = - \log \hteta(\bx,r_T)$ \Comment{Calculate cost $g(r_T, \bx)$ for the tree}
\State $\hat f(r_T, \bx) = g(r_T, \bx) + \log q_{\max} -\log \max_{j \in \labels_{r_T}}q_j$ \Comment{Calculate estimated cost $\hat f(r_T, \bx)$ for the tree root}

\State $\calQ\mathrm{.add}((r_T, g(r_T, \bx), \hat f(r_T, \bx))$ \Comment{Add the tree root with cost $g(r_T, \bx)$ and estimation $\hat f(r_T, \bx)$ to the queue}
\While{$\|\hat\by\|_1 < k$} \Comment{While the number of predicted labels is less than $k$}
	\State $(v, g(v, \bx),\_) = \calQ\mathrm{.pop}()$ \Comment{Pop the  element with the lowest cost from the queue (only node and corresponding probability)}
	\If{$v$ is a leaf} 
		$\hy_v = 1$ \Comment{If the node is a leaf, set the corresponding label in the prediction vector}
	\Else $\,$ \textbf{for} $v' \in \childs{v}$ \textbf{do} \Comment{If the node is an internal node, for all child nodes}
	    \State $g(v', \bx) = g(v, \bx) - \log \hteta(\bx,v')$ \Comment{Compute $g(v', \bx)$ using $\hteta(v', \bx) \in H$}
	    \State $\hat f(v',\bx) = g(v', \bx) + \log q_{\max} -\log \max_{j \in \labels_{v'}}q_j$  \Comment{Calculate estimation $\hat f(v',\bx)$}
		\State $\calQ\mathrm{.add}((v', g(v', \bx), \hat f(v',\bx)))$  \Comment{Add the node, computed cost $g(v', \bx)$, and estimation $\hat f(v',\bx)$ to the queue}
	\EndIf 
\EndWhile
\State \textbf{return} $\hat\by$ \Comment{Return the prediction vector}
\end{algorithmic}
\end{small}
\end{algorithm*}

An inference procedure for \Algo{PLT}s, based on \Algo{uniform-cost search}, has been introduced in \citep{Jasinska_et_al_2016}. It efficiently finds $k$ leaves, with highest $\heta_j(\bx)$ values. 
Since inverse propensity is larger than one, the same method cannot be reliably applied to find leaves with the $k$ highest products of $q_j$ and $\hteta_j(\bx)$. 
To do it, we modify this procedure to an \Algo{$A^*$-search}-style algorithm. 
To this end we introduce cost function $f(\leafnode_j, \bx)$ for each path from the root to a leaf.
Notice that:
\begin{equation}
q_j\hteta_j(\bx) = \exp \Bigg(-\bigg(- \log q_j - \sum_{v \in \Path{l_j}} \log \hteta(\bx, v) \bigg)\Bigg) \,.
\end{equation}
This allows us to use the following definition of the cost function: 
\begin{equation}
f(\leafnode_j, \bx) = \log q_{\max} - \log q_j - \sum_{v \in \Path{l_j}} \log \hteta(\bx, v) \,,
\end{equation}
where $q_{\max} = \max_{j \in \labels} q_j$ is a natural upper bound of $q_j \hteta_j(\bx)$ for all paths. 
We can then guide the \Algo{A*-search} with function $\hat f(v, \bx) = g(v, \bx) + h(v, \bx)$, 
estimating the value of the optimal path, where: 
\begin{equation}
g(v, \bx) = - \sum_{v' \in \Path{v}} \log \hteta(\bx, v')
\end{equation}
is a cost of reaching tree node $v$ from the root, and: 
\begin{equation}
h(v, \bx) =  \log q_{\max} -\log \max_{j \in \labels_{v}}q_j
\end{equation}
is a heuristic function estimating the cost of reaching the best leaf node from node $v$.
To guarantee that \Algo{$A^*$-search} finds the optimal solution---top-$k$ labels with the highest  $f(\leafnode_j, \bx)$ and thereby top-$k$ labels with the highest $q_j\hteta_j(\bx)$---%
we need to ensure that $h(v, \bx)$ is admissible, i.e., it never overestimates the cost of reaching a leaf node~\citep{Russell_Norvig_2016}.
We also would like $h(v, \bx)$ to be consistent, making the \Algo{$A^*$-search} optimally efficient, i.e., there is no other algorithm used with the heuristic that expands fewer nodes~\citep{Russell_Norvig_2016}.
Notice that the heuristic function assumes that probabilities estimated in nodes in a subtree rooted in $v$ are equal to 1. Since $\log 1 = 0$, the heuristic comes to finding the label in the subtree of $v$ with the largest value of the inverse propensity.

Algorithm~\ref{alg:ps-plt-prediction} outlines the prediction procedure for \Algo{PLT}s 
that returns the top-$k$ labels with the highest values of $q_j\hteta_j(\bx)$. 
We call this algorithm Propensity-scored PLTs (\Algo{PS-PLT}s). 
The algorithm is very similar to the original \Algo{Uniform-Cost Search} prediction procedure 
used in \Algo{PLT}s, 
which finds the top-$k$ labels with the highest $\heta_j(\bx)$. 
The difference is that nodes in \Algo{PS-PLT} are evaluated in the ascending order 
of their estimated cost values $\hat f(v, \bx)$ 
instead of decreasing conditional probabilities $\heta_v(\bx)$.

\begin{restatable}{theorem}{optimal-efficiency-of-psplt}
\label{thm:optimal-efficiency-of-psplt}
For any $T, H, \bq$, and $\bx$ the Algorithm~\ref{alg:ps-plt-prediction} is admissible
and optimally efficient.
\end{restatable}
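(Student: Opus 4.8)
The plan is to verify the two standard conditions from the A* literature—admissibility and consistency of the heuristic $h(v,\bx)$—and then invoke the classical theorems that (i) A* with an admissible heuristic returns an optimal-cost solution and (ii) A* with a consistent heuristic is optimally efficient among algorithms using that heuristic. Since the paper already reduces "find top-$k$ labels with highest $q_j\hteta_j(\bx)$" to "find the $k$ leaves with smallest $f(\leafnode_j,\bx)$", and the while-loop pops nodes in ascending $\hat f$, it suffices to show that the per-step behaviour of Algorithm~\ref{alg:ps-plt-prediction} coincides with A* tree-search driven by $\hat f(v,\bx) = g(v,\bx) + h(v,\bx)$, and that the heuristic is well-behaved.

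First I would check \emph{admissibility}: for any internal node $v$ and any leaf $\leafnode_j \in \leaves_v$ reachable from $v$, the true remaining cost along the path from $v$ to $\leafnode_j$ is $-\sum_{v' \in \Path{\leafnode_j} \setminus \Path{v}} \log\hteta(\bx,v') \;+\; (\log q_{\max} - \log q_j)$. Because each $\hteta(\bx,v') \in [0,1]$, we have $-\log\hteta(\bx,v') \ge 0$, so the remaining cost is at least $\log q_{\max} - \log q_j \ge \log q_{\max} - \log\max_{j' \in \labels_v} q_{j'} = h(v,\bx)$. Hence $h$ never overestimates; it is admissible. (At a leaf $\leafnode_j$ itself, $\labels_v = \{j\}$ and $h = \log q_{\max} - \log q_j$, which equals the residual constant term, so $\hat f(\leafnode_j,\bx) = f(\leafnode_j,\bx)$ exactly—this is what makes the pop test on a leaf correct.)

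Next I would check \emph{consistency}: for $v$ and any child $v' \in \childs{v}$, consistency requires $h(v,\bx) \le c(v,v') + h(v',\bx)$, where $c(v,v') = -\log\hteta(\bx,v') \ge 0$ is the edge cost. Since $\labels_{v'} \subseteq \labels_v$, we get $\max_{j \in \labels_{v'}} q_j \le \max_{j \in \labels_v} q_j$, hence $h(v',\bx) = \log q_{\max} - \log\max_{j\in\labels_{v'}}q_j \ge h(v,\bx)$, and adding the nonnegative edge cost only strengthens the inequality. So $h$ is consistent (monotone), which in particular re-implies admissibility and guarantees that $\hat f$ is nondecreasing along any root-to-leaf path. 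Then the argument concludes: by consistency, when the loop pops a leaf $\leafnode_j$ its $\hat f = f$ value is a lower bound on the $f$-cost of every leaf still in or not yet added to $\calQ$ (standard A* invariant, using that $\hat f$ is nondecreasing on paths and the queue is ordered by $\hat f$), so the leaves are popped in nondecreasing order of their true cost $f(\leafnode_j,\bx)$; extracting the first $k$ of them yields exactly the $k$ smallest, i.e. the top-$k$ labels by $q_j\hteta_j(\bx)$. Optimal efficiency among heuristic-guided algorithms follows from the consistency of $h$ together with the classical optimality-of-A* result cited as~\citep{Russell_Norvig_2016}; I would also remark that the top-$k$ variant is handled by simply continuing the search after the first goal node, which does not affect these properties since each popped goal is optimal among the remaining candidates.

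The main obstacle I expect is not any single inequality—the monotonicity chain $\labels_{v'}\subseteq\labels_v$ does the heavy lifting—but rather pinning down the $k>1$ case cleanly: the textbook A* statements are phrased for a single goal, so I would need to state and justify the folklore extension that repeatedly popping goal nodes from the A* frontier enumerates them in nondecreasing cost order, and argue that ``optimal efficiency'' (no other algorithm with the same heuristic expands fewer nodes) transfers to this top-$k$ setting. A secondary subtlety is making precise that Algorithm~\ref{alg:ps-plt-prediction}'s ``add children on pop'' scheme is genuinely A* \emph{tree}-search on the PLT (every node has a unique parent, so there is no closed-list/re-expansion issue), which makes the correspondence to the cited results exact.
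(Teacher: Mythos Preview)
Your proposal is correct and follows essentially the same route as the paper: admissibility via the nonnegativity of $-\log\hteta(\bx,v')$, and consistency via the monotonicity $\max_{j\in\labels_{v'}}q_j \le \max_{j\in\labels_v}q_j$ together with the nonnegative edge cost, then an appeal to the standard A* results in~\citep{Russell_Norvig_2016}. The paper's own proof is in fact terser than yours---it does not spell out the leaf case $\hat f(\leafnode_j,\bx)=f(\leafnode_j,\bx)$, the top-$k$ extension, or the tree-search observation---so the subtleties you flag as potential obstacles are things the paper simply leaves implicit.
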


\begin{proof}
\Algo{$A^*$-search} finds an optimal solution if the heuristic $h$ is admissible, i.e., 
if it never overestimates the true value of $h^*$, 
the cost value of reaching the best leaf in a subtree of node $v$.
For node $\node \in \nodes$, we have:
\begin{equation}
h^*(v, \bx) = \log q_{\max} - \log \max_{j \in \labels_{v}} q_j - \!\!\!\sum_{v'\in \Path{l_j}\setminus\Path{v} } \!\!\! \log \hteta(\bx, v') \,.
\end{equation}
Since $\hteta(\bx, v) \in [0, 1]$ and therefore $\log \hteta(\bx, v) \le 0$, 
we have that $h^*(v, \bx) \ge h(v, \bx)$, for all $v \in V_T$, which proves admissibility. 

\Algo{$A^*$-search} is optimally efficient if $h(v, \bx)$ is consistent (monotone), i.e., its estimate is always less than or equal to the estimate for any child node plus the cost of reaching that child. 
Since we have that $\max_{j \in \leaves_{\pa{v}}} q_j \ge \max_{j \in \leaves_{v}} q_j$, and the cost of reaching $v$ from $\pa{v}$ is $-\log(\hteta(\bx, v))$ which is greater or equal 0,
it holds that $h(\pa{v}, \bx) \le h(v, \bx) - \log(\hteta(\bx, v))$.
\end{proof}

The same cost function $f(\leafnode_j, \bx)$ can be used with other tree inference algorithms (for example discussed by \citet{Jasinska-Kobus_et_al_2020}), including \Algo{beam search}~\citep{Kumar_et_al_2013}, that is approximate method for finding $k$ leaves with highest $\heta_j(\bx)$. It is used in many existing label tree implementations such as \Algo{Parabel}, \Algo{Bonsai}, \Algo{AttentionXML} and \Algo{PECOS}. We present \Algo{beam search} variant of \Algo{PS-PLT} in the Appendix.


\section{Experimental results}
\label{sec:experimental-results}

\begin{table*}[!ht]

\begin{center}
\caption{\Algo{PS-PLT}s and \Algo{PLT}s compared to other state-of-the-art algorithms on propensity-scored and standard  precision$@\{1, 3, 5\}$~$[\%]$. The best result for each measure is in bold. The best result in the group of sub-linear methods (the last 4 methods) is underlined.}
\label{tab:psplt-vs-sota}

\tabcolsep=3pt
\resizebox{\textwidth}{!}{
\begin{tabular}{l|rrr|rrr|rrr|rrr|rrr|rrr}

\toprule
\multicolumn{1}{c|}{Algorithm} 
& \multicolumn{1}{c}{$psp@1$} 
& \multicolumn{1}{c}{$psp@3$} 
& \multicolumn{1}{c|}{$psp@5$}
& \multicolumn{1}{c}{$p@1$} 
& \multicolumn{1}{c}{$p@3$} 
& \multicolumn{1}{c|}{$p@5$}
& \multicolumn{1}{c}{$psp@1$} 
& \multicolumn{1}{c}{$psp@3$} 
& \multicolumn{1}{c|}{$psp@5$}
& \multicolumn{1}{c}{$p@1$} 
& \multicolumn{1}{c}{$p@3$} 
& \multicolumn{1}{c|}{$p@5$}
& \multicolumn{1}{c}{$psp@1$} 
& \multicolumn{1}{c}{$psp@3$} 
& \multicolumn{1}{c|}{$psp@5$}
& \multicolumn{1}{c}{$p@1$} 
& \multicolumn{1}{c}{$p@3$} 
& \multicolumn{1}{c}{$p@5$} \\

\specialrule{0.70pt}{0.4ex}{0.65ex}
& \multicolumn{6}{c|}{\eurlex, $A=0.55, B=1.5$} & \multicolumn{6}{c|}{\amazoncatsmall, $A=0.55, B=1.5$} & \multicolumn{6}{c}{\wikiten, $A=0.55, B=1.5$} \\
\midrule
\Algo{ProXML} & 45.20 & 48.50 & 51.00 & \textbf{86.50} & 68.40 & 53.20 & \multicolumn{6}{c|}{results not reported} & \multicolumn{6}{c}{results not reported} \\
\Algo{PW-DiSMEC} & 43.48 & 48.81 & 51.25 & 82.25 & 68.80 & 57.18 & 64.95 & 71.35 & 74.37 & \textbf{93.54} & 78.50 & 63.33 & 12.67 & 15.87 & 18.28 & \textbf{85.77} & \textbf{78.17} & \textbf{68.53} \\
\Algo{BR} & 36.67 & 44.54 & 49.05 & 81.91 & \textbf{68.85} & \textbf{57.83} & 51.54 & 64.16 & 71.20 & 92.89 & 78.35 & 63.69 & 12.03 & 13.24 & 14.07 & 84.49 & 72.50 & 63.23 \\
\Algo{PS-BR} & \textbf{46.13} & \textbf{49.60} & \textbf{51.78} & 78.45 & 68.01 & 57.62 & 66.00 & 71.28 & 74.08 & 86.55 & 76.22 & 63.15 & 19.24 & 17.69 & 17.60 & 80.61 & 69.70 & 61.86 \\
\midrule
\Algo{PfastreXML} & 43.86 & 45.72 & 46.97 & 75.45 & 62.70 & 52.51 & \underline{\textbf{69.52}} & \underline{\textbf{73.22}} & \underline{\textbf{75.48}} & 91.75 & 77.97 & 63.68 & 19.02 & 18.34 & 18.43 & 83.57 & 68.61 & 59.10 \\
\Algo{Parabel} & 36.36 & 44.04 & 48.29 & 81.73 & \underline{68.78} & \underline{57.44} & 50.93 & 64.00 & 72.08 & 93.03 & \underline{\textbf{79.16}} & \underline{\textbf{64.52}} & 11.66 & 12.73 & 13.68 & 84.31 & 72.57 & 63.39 \\
\Algo{PLT} & 36.00 & 43.30 & 47.31 & \underline{81.77} & 68.33 & 57.15 & 50.02 & 63.15 & 71.24 & \underline{\textbf{93.37}} & 78.90 & 64.18 & 12.77 & 14.45 & 15.12 & \underline{85.54} & \underline{74.56} & \underline{64.48} \\
\Algo{PS-PLT} & \underline{44.73} & \underline{48.52} & \underline{50.84} & 79.19 & 67.81 & 57.15 & 66.81 & 72.05 & 74.88 & 88.04 & 77.16 & 63.84 & \underline{\textbf{21.83}} & \underline{\textbf{19.77}} & \underline{\textbf{19.12}} & 74.12 & 65.87 & 59.08 \\

\specialrule{0.70pt}{0.4ex}{0.65ex}
& \multicolumn{6}{c|}{\wikilshtc, $A=0.5, B=0.4$} & \multicolumn{6}{c|}{\wikipedia, $A=0.5, B=0.4$} & \multicolumn{6}{c}{\amazon, $A=0.6, B=2.6$} \\
\midrule
\Algo{ProXML} & 34.80 & 37.70 & 41.00 & 63.60 & 41.50 & 30.80 & 33.10 & 35.00 & \textbf{39.40} & \textbf{68.80} & \textbf{48.90} & 37.90 & 30.80 & 32.80 & 35.10 & 43.50 & 38.70 & 35.30 \\
\Algo{PW-DiSMEC} & \textbf{37.12} & \textbf{40.36} & \textbf{43.57} & \textbf{65.27} & 42.68 & 31.48 & 30.32 & 31.56 & 33.52 & 66.38 & 45.69 & 35.85 & \textbf{31.24} & 33.27 & 35.51 & 41.70 & 37.81 & 34.92 \\
\midrule
\Algo{PfastreXML} & 30.66 & 31.55 & 33.12 & 56.05 & 36.79 & 27.09 & 29.20 & 27.60 & 27.70 & 59.50 & 40.20 & 30.70 & 29.30 & 30.80 & 32.43 & 39.46 & 35.81 & 33.05 \\
\Algo{Parabel} & 26.76 & 33.27 & 37.36 & \underline{65.04} & \underline{\textbf{43.23}} & \underline{\textbf{32.05}} & 28.80 & 31.90 & 34.60 & 67.50 & 48.70 & 37.70 & 25.43 & 29.43 & 32.85 & 44.89 & 39.80 & 36.00 \\
\Algo{PLT} & 26.00 & 31.93 & 35.62 & 63.87 & 42.25 & 31.34 & 26.28 & 30.93 & 34.15 & 67.50 & 48.26 & 37.74 & 26.31 & 30.22 & 33.83 & \underline{\textbf{45.01}} & \underline{\textbf{40.21}} & \underline{\textbf{36.72}} \\
\Algo{PS-PLT} & \underline{32.84} & \underline{36.17} & \underline{39.20} & 64.57 & 43.17 & 32.01 & \underline{\textbf{34.12}} & \underline{\textbf{35.70}} & \underline{38.14} & \underline{67.53} & \underline{48.68} & \underline{\textbf{38.23}} & \underline{31.14} & \underline{\textbf{33.45}} & \underline{\textbf{35.60}} & 43.71 & 39.72 & 36.60 \\

\bottomrule
\end{tabular}
}
\end{center}
\end{table*}

\begin{table}[!ht]
\begin{center}

\caption{\Algo{PS-PLT} and \Algo{PLT} average CPU train and prediction time compared to other state-of-the-art algorithms.}
\label{tab:psplt-vs-plt-test-time}

\tabcolsep=3pt
\resizebox{\linewidth}{!}{
\begin{tabular}{l|rrr|rr}

\toprule
\multicolumn{1}{c|}{Dataset}
& \multicolumn{1}{c}{\Algo{ProXML}}
& \multicolumn{1}{c}{\Algo{PW-DiSMEC}}
& \multicolumn{1}{c|}{\Algo{PfastreXML}}
& \multicolumn{1}{c}{\Algo{PLT}}
& \multicolumn{1}{c}{\Algo{PS-PLT}} \\
\specialrule{0.70pt}{0.4ex}{0.65ex}
& \multicolumn{5}{c}{$t_{train}$ [h]} \\
\midrule
\wikilshtc & $\approx151760$ & $\approx1437$ & 6.25 & \multicolumn{2}{S}{9.21} \\
\wikipedia & $\approx1595920$ & $\approx16272$ & 51.07 & \multicolumn{2}{S}{46.17} \\
\amazon & $\approx75160$ & $\approx 810$ & 3.01 & \multicolumn{2}{S}{1.92} \\
\specialrule{0.70pt}{0.4ex}{0.65ex}
& \multicolumn{5}{c}{$t_{test}/N_{test}$ [ms]} \\
\midrule

\wikilshtc & $\approx90$ & $\approx82$ & 4.10 & 4.96 & 12.40 \\
\wikipedia & $\approx496$ & $\approx457$ & 15.24 & 26.40 & 60.01 \\
\amazon & $\approx111$ & $\approx 103$ & 9.96 & 12.06 & 20.40 \\

\bottomrule
\end{tabular}
}
\end{center}
\end{table}

In this section, we empirically show the usefulness of the proposed plug-in approach by incorporating it into \Algo{BR} and \Algo{PLT} algorithms and comparing these algorithms to their vanilla versions and state-of-the-art methods, particularly those that focus on tail-labels performance: \Algo{PFastreXML}~\citep{Jain_et_al_2016}, \Algo{ProXML}~\citep{Babbar_Scholkopf_2019}, 
a variant of \Algo{DiSMEC}~\citep{Babbar_Scholkopf_2017} with a re-balanced and unbiased loss function as implemented in \Algo{PW-DiSMEC}~\citep{Qaraei_et_al_2021} (class-balanced variant),  
and \Algo{Parabel}~\citep{Prabhu_et_al_2018}. We conduct a comparison on six well-established XMLC benchmark datasets from the XMLC repository~\citep{Bhatia_et_al_2016}, for which we use the original train and test splits. Statistics of the used datasets can be found in the Appendix. For algorithms listed above, we report results as found in respective papers.

Since true propensities are unknown for the benchmark datasets, as true $\by$ is unavailable due to the large label space, 
for empirical evaluation we model propensities as proposed by \citet{Jain_et_al_2016}: 

\begin{equation}
    p_j = \prob(\tilde{y}_j= 1 \given y_{j} = 1) = \frac{1}{q_j} = \frac{1}{1 + C e^{-A \log (N_j + B)}} \,,
\end{equation}
where $N_j$ is the number of data points annotated with label $j$ in the observed ground truth dataset of size $N$, parameters $A$ and $B$ are specific for each dataset, and $C = (\log N - 1)(B + 1)^A$. 
We calculate propensity values on train set for each dataset using parameter values recommended in \citep{Jain_et_al_2016}. Values of $A$ and $B$ are included in Table~\ref{tab:psplt-vs-sota}. We evaluate all algorithms with both propensity-scored and standard precision$@k$.

We modified the recently introduced \Algo{napkinXC}~\citep{Jasinska-Kobus_et_al_2020} implementation of \Algo{PLT}s,
\footnote{Repository with the code and scripts to reproduce the experiments: \url{https://github.com/mwydmuch/napkinXC}} 
which obtains state-of-the-art results and uses the \Algo{Uniform-Cost Search} as its inference method.
We train binary models in both \Algo{BR} and \Algo{PLT}s using the \Algo{LIBLINEAR} library~\citep{liblinear} with $L_2$-regularized logistic regression. For \Algo{PLT}s, we use an ensemble of 3 trees built with the hierarchical 2-means clustering algorithm (with clusters of size 100), popularized by \Algo{Parabel}~\citep{Prabhu_et_al_2018}. Because the tree-building procedure involves randomness, we repeat all \Algo{PLT}s experiments five times and report the mean performance. We report standard errors along with additional results for popular $L_2$-regularized squared hinge loss and for \Algo{beam search} variant of \Algo{PS-PLT} in the Appendix. The experiments were performed on an Intel Xeon E5-2697 v3 2.6GHz machine with 128GB of memory. 

The main results of the experimental comparison are presented in Table~\ref{tab:psplt-vs-sota}. 
Propensity-scored \Algo{BR} and \Algo{PLT}s consistently obtain better propensity-scored precision$@k$. 
At the same time, they slightly drop the performance on the standard precision$@k$ on four and improve it on two datasets. There is no single method that dominates others on all datasets, but \Algo{PS-PLT}s is the best sub-linear method, achieving best results on $psp@\{1,3,5\}$ in this category on five out of six datasets, at the same time in many cases being competitive to \Algo{ProXML} and \Algo{PW-DiSMEC} that often require orders of magnitude more time for training and prediction than \Algo{PS-PLT}. In Table~\ref{tab:psplt-vs-plt-test-time}, we show CPU train and test times of \Algo{PS-PLT}s compared to vanilla \Algo{PLT}s, \Algo{PfasterXML}, \Algo{ProXML} and \Algo{PW-DiSMEC} on our hardware (approximated for the last two using a subset of labels).

\section{Conclusions}

In this work, we demonstrated a simple approach for obtaining Bayes optimal predictions for propensity-scored precision$@k$, which can be applied to a wide group of probabilistic classifiers. Particularly we introduced an admissible and consistent inference algorithm for probabilistic labels trees, being the underlying model of such methods
like \Algo{Parabel}, \Algo{Bonsai}, \Algo{napkinXC}, \Algo{extremeText}, \Algo{AttentionXML} and \Algo{PECOS}.

\Algo{PS-PLT}s show significant improvement with respect to propensity-scored precision$@k$, achieving state-of-the-art results in the group of algorithms with sub-linear training and prediction times. Furthermore, the introduced approach does not require any retraining of underlining classifiers if the propensities change. Since in real-world applications estimating true propensities may be hard, this property makes our approach suitable for dynamically changing environments, especially if we take into account the fact that many of \Algo{PLT}s-based algorithms can be trained incrementally~\citep{Jasinska_et_al_2016,Wydmuch_et_al_2018,You_et_al_2019,Jasinska-Kobus_et_al_2021}.

\section*{Acknowledgments}

Computational experiments have been performed in Poznan Supercomputing and Networking Center.

\bibliographystyle{ACM-Reference-Format}
\balance
\bibliography{references}

\appendix
\onecolumn

\pagebreak

\section{Datasets}

\begin{table*}[!h]
    \centering
    \caption{The number of unique features, labels, examples in train and test splits, and the average number of true labels per example in the benchmark data sets and corresponding $A$, $B$ parameters for empirical propensity modeling.}
    \label{tab:datasets}
    
    \begin{tabular}{l|rrrrr|ll}
        \toprule
        Dataset & \multicolumn{1}{c}{$\dim{\calX}$} & \multicolumn{1}{c}{$\dim{\calY}$ ($m$)} &  \multicolumn{1}{c}{$N_{\textrm{train}}$} & \multicolumn{1}{c}{$N_{\textrm{test}}$} & \multicolumn{1}{c|}{avg. $|\labels_{\bx}|$} & \multicolumn{1}{c}{$A$} & \multicolumn{1}{c}{$B$} \\
        \midrule
        \eurlex         & 5000      & 3993      & 15539      & 3809      & 5.31  & 0.55 & 1.5 \\
        \amazoncatsmall & 203882    & 13330     & 1186239    & 306782    & 5.04  & 0.55 & 1.5 \\
        \wikiten        & 101938    & 30938     & 14146      & 6616      & 18.64 & 0.55 & 1.5 \\
        \wikilshtc      & 1617899   & 325056    & 1778351    & 587084    & 3.19  & 0.5  & 0.4 \\
        \wikipedia      & 2381304   & 501070    & 1813391    & 783743    & 4.77  & 0.5 & 0.4 \\
        \amazon         & 135909    & 670091    & 490449     & 153025    & 5.45  & 0.6  & 2.6 \\
        \bottomrule
    \end{tabular}

\end{table*}

\section{PS-PLT with \Algo{beam search} inference}

\Algo{Beam search} is a greedy search method that on each level of the tree keeps only $b$ nodes with the highest probability estimates and discards the rest. Therefore it may not find the actual top $k$ labels and may suffer regret for precision$@k$~\citep{Zhuo_et_al_2020}, but it guarantees logarithmic time and performs prediction level-by-level, which allows for easier implementation and memory management in large models. In Algorithm~\ref{alg:ps-plt-beam-search-prediction} we present \Algo{beam search} variant of \Algo{PS-PLT}. The presented algorithm assumes that tree $T$ is balanced. 

\begin{algorithm*}[!ht]
\caption{\Algo{PS-PLT.PredictTopLabelsWithBeamSearch}$(T, H, \bq, \bx, k, b)$}
\label{alg:ps-plt-beam-search-prediction}
\begin{small}
\begin{algorithmic}[1] 
\State $\hat\by = \vec{0}$, $q_{\max} = \max_{j \in \labels} q_j$, $\calB = \emptyset$,  \Comment{Initialize prediction $\hat\by$ vector to all zeros, $q_{\max}$ and a list $\calB$}
\State $g(r_T, \bx) = - \log \hteta(\bx,r_T)$ \Comment{Calculate cost $g(r_T, \bx)$ for the tree}
\State $\hat f(r_T, \bx) = g(r_T, \bx) + \log q_{\max} -\log \max_{j \in \labels_{r_T}}q_j$ \Comment{Calculate estimated cost $\hat f(r_T, \bx)$ for the tree root}
\State $\calB\mathrm{.add}((r_T, g(r_T, \bx), \hat f(r_T, \bx))$ \Comment{Add the tree root with cost $g(r_T, \bx)$ and estimation $\hat f(r_T, \bx)$ to the list}
\For{$d = 0$; $d < $ depth of $T$; $d = d + 1$} \Comment{For each level of the tree $T$}
	\State{$\calB' = $ \Algo{SelectTopNodes}($\calB$, $b$)} \Comment{Select $b$ nodes from $\calB'$ with highest values of $\hat f(v',\bx)$}
    \State{$\calB = \emptyset$} \Comment{Initialize list of nodes of the next level of the tree}
    \For{$(v, g(v, \bx),\_) \in \calB'$} \Comment{Iterate over elements on the list $\calB'$ (nodes and corresponding probabilities)}
    	\For{$v' \in \childs{v}$} \Comment{For all child nodes}
    	    \State $g(v', \bx) = g(v, \bx) - \log \hteta(\bx,v')$ \Comment{Compute $g(v', \bx)$ using $\hteta(v', \bx) \in H$}
    	    \State $\hat f(v',\bx) = g(v', \bx) + \log q_{\max} -\log \max_{j \in \labels_{v'}}q_j$  \Comment{Calculate estimation $\hat f(v',\bx)$}
    		\State $\calB\mathrm{.add}((v', g(v', \bx), \hat f(v',\bx)))$  \Comment{Add the node, computed cost $g(v', \bx)$, and estimation $\hat f(v',\bx)$ to the list $\calB$}
    	\EndFor	
	\EndFor
\EndFor
\For{($v$, \_, \_) $\in$ \Algo{SelectTopNodes}($\calB$, $k$)} $\hy_v = 1$ \Comment{Select $k$ leaves from $\calB$ with highest values of $\hat f(v',\bx)$ and set the corresponding labels in $\hat\by$}
\EndFor
\State \textbf{return} $\hat\by$ \Comment{Return the prediction vector}
\end{algorithmic}
\end{small}
\end{algorithm*}

\section{Detailed results of different variants of PS-PLTs}

In Table~\ref{tab:psplt-details} we report the detailed results of \Algo{PLT} with nodes trained using logistic loss ($log$) and squared hinge loss ($h^2$) and \Algo{PS-PLT} with \Algo{A*-search} ($A^*$) presented in Algorithm~\ref{alg:ps-plt-prediction} as well as with \Algo{beam search} version ($beam$) presented in Algorithm~\ref{alg:ps-plt-beam-search-prediction}. For \Algo{beam search} variant we use $b = 10$ which is default value in many popular implementations, since it provides good trade off between predictive and computational performance when predicting top-5 labels. 
All variants use the ensemble of 3 trees and the same tree structures, built with the hierarchical 2-means clustering algorithm (with clusters of size 100). This means that the difference between variants is only in learning node classifiers and inference (tree search) methods.

The results show that all variants of \Algo{PL-PLT}s consistently obtain better propensity-scored precision$@k$. \Algo{PS-PLT}s trained with logistic loss achieves greater improvement in terms of $psp@\{1,3,5\}$ over vanilla \Algo{PLT} than variant trained with squared hinge loss. While \Algo{PS-PLT}s trained with squared hinge loss suffer a small drop in the performance on the standard precision$@k$. For both losses, \Algo{beam search} variant allows for further decrease of inference time at the cost of an only minor decrease in terms of predictive performance.

\begin{table*}[!h]
\caption{Mean performance with standard errors, rounded to two decimal places, of different variants of \Algo{PS-PLT}s on propensity-scored and standard  precision$@\{1, 3, 5\}$~$[\%]$, train time~$[h]$ and inference time per example~$[ms]$. The best result for each measure is in bold.}
\begin{center}
\label{tab:psplt-details}
\tabcolsep=3pt
\begin{tabular}{l|rrr|rrr|lr}

\toprule
\multicolumn{1}{c|}{} 
& \multicolumn{1}{c}{$psp@1$} 
& \multicolumn{1}{c}{$psp@3$} 
& \multicolumn{1}{c|}{$psp@5$}
& \multicolumn{1}{c}{$p@1$} 
& \multicolumn{1}{c}{$p@3$} 
& \multicolumn{1}{c|}{$p@5$}
& \multicolumn{1}{c}{$T_{train}$}
& \multicolumn{1}{c}{$T_{test}/N_{test}$} \\

\specialrule{0.94pt}{0.4ex}{0.65ex}
& \multicolumn{8}{c}{\eurlex} \\
\midrule
PLT$_{log}$ & 36.00 $\pm$ 0.07 & 43.30 $\pm$ 0.09 & 47.31 $\pm$ 0.09 & \textbf{81.77 $\pm$ 0.09} & 68.33 $\pm$ 0.11 & 57.15 $\pm$ 0.08 & \multirow{3}{*}{$\begin{rcases}\\ \\ \\ \end{rcases}$ 0.04 $\pm$ 0.00} & 2.83 $\pm$ 0.10 \\
PS-PLT$_{log + A^*}$ & \textbf{44.73 $\pm$ 0.06} & \textbf{48.52 $\pm$ 0.11} & \textbf{50.84 $\pm$ 0.11} & 79.19 $\pm$ 0.09 & 67.81 $\pm$ 0.07 & 57.15 $\pm$ 0.09 & & 5.66 $\pm$ 0.14 \\
PS-PLT$_{log + beam}$ & 44.72 $\pm$ 0.07 & 48.48 $\pm$ 0.11 & 50.77 $\pm$ 0.12 & 79.19 $\pm$ 0.09 & 67.78 $\pm$ 0.06 & 57.12 $\pm$ 0.11 & & 1.75 $\pm$ 0.12 \\
PLT$_{h^2}$ & 36.21 $\pm$ 0.05 & 44.01 $\pm$ 0.12 & 48.41 $\pm$ 0.16 & 81.66 $\pm$ 0.14 & \textbf{68.75 $\pm$ 0.15} & \textbf{57.54 $\pm$ 0.14} & \multirow{3}{*}{$\begin{rcases}\\ \\ \\ \end{rcases}$ \textbf{0.02 $\pm$ 0.00}} & 1.83 $\pm$ 0.07 \\
PS-PLT$_{h^2 + A^*}$ & 44.21 $\pm$ 0.05 & 48.51 $\pm$ 0.12 & 50.60 $\pm$ 0.15 & 80.72 $\pm$ 0.14 & 67.99 $\pm$ 0.10 & 56.20 $\pm$ 0.14 & & 2.60 $\pm$ 0.17 \\
PS-PLT$_{h^2 + beam}$ & 44.21 $\pm$ 0.08 & 48.49 $\pm$ 0.13 & 50.57 $\pm$ 0.13 & 80.69 $\pm$ 0.14 & 67.97 $\pm$ 0.12 & 56.22 $\pm$ 0.11 & & \textbf{1.65 $\pm$ 0.02} \\

\specialrule{0.94pt}{0.4ex}{0.65ex}
& \multicolumn{8}{c}{\amazoncatsmall} \\
\midrule
PLT$_{log}$ & 50.02 $\pm$ 0.01 & 63.15 $\pm$ 0.03 & 71.24 $\pm$ 0.06 & \textbf{93.37 $\pm$ 0.02} & 78.90 $\pm$ 0.04 & 64.18 $\pm$ 0.05 & \multirow{3}{*}{$\begin{rcases}\\ \\ \\ \end{rcases}$ 3.14 $\pm$ 0.06} & 1.74 $\pm$ 0.07 \\
PS-PLT$_{log + A^*}$ & \textbf{66.81 $\pm$ 0.03} & \textbf{72.05 $\pm$ 0.04} & \textbf{74.88 $\pm$ 0.05} & 88.04 $\pm$ 0.05 & 77.16 $\pm$ 0.04 & 63.84 $\pm$ 0.03 & & 3.71 $\pm$ 0.37 \\
PS-PLT$_{log + beam}$ & 66.78 $\pm$ 0.03 & 72.01 $\pm$ 0.04 & 74.85 $\pm$ 0.04 & 88.04 $\pm$ 0.05 & 77.16 $\pm$ 0.04 & 63.84 $\pm$ 0.03 & & 1.19 $\pm$ 0.09 \\
PLT$_{h^2}$ & 50.91 $\pm$ 0.01 & 63.89 $\pm$ 0.03 & 71.94 $\pm$ 0.06 & 93.00 $\pm$ 0.04 & \textbf{79.06 $\pm$ 0.04} & \textbf{64.43 $\pm$ 0.04} & \multirow{3}{*}{$\begin{rcases}\\ \\ \\ \end{rcases}$ \textbf{1.01 $\pm$ 0.05}} & 1.20 $\pm$ 0.04 \\
PS-PLT$_{h^2 + A^*}$ & 65.97 $\pm$ 0.03 & 71.96 $\pm$ 0.06 & 74.76 $\pm$ 0.11 & 88.76 $\pm$ 0.05 & 77.75 $\pm$ 0.06 & 63.75 $\pm$ 0.08 & & 2.31 $\pm$ 0.04 \\
PS-PLT$_{h^2 + beam}$ & 65.96 $\pm$ 0.03 & 71.94 $\pm$ 0.07 & 74.84 $\pm$ 0.12 & 88.77 $\pm$ 0.06 & 77.75 $\pm$ 0.06 & 63.84 $\pm$ 0.09 & & \textbf{0.92 $\pm$ 0.03} \\

\specialrule{0.94pt}{0.4ex}{0.65ex}
& \multicolumn{8}{c}{\wikiten} \\
\midrule
PLT$_{log}$ & 12.77 $\pm$ 0.02 & 14.45 $\pm$ 0.01 & 15.12 $\pm$ 0.01 & \textbf{85.54 $\pm$ 0.07} & \textbf{74.56 $\pm$ 0.05} & \textbf{64.48 $\pm$ 0.03} & \multirow{3}{*}{$\begin{rcases}\\ \\ \\ \end{rcases}$ 0.46 $\pm$ 0.01} & 25.08 $\pm$ 0.39 \\
PS-PLT$_{log + A^*}$ & \textbf{21.83 $\pm$ 0.07} & \textbf{19.77 $\pm$ 0.03} & \textbf{19.12 $\pm$ 0.04} & 74.12 $\pm$ 0.09 & 65.87 $\pm$ 0.13 & 59.08 $\pm$ 0.15 & & 74.37 $\pm$ 1.08 \\
PS-PLT$_{log + beam}$ & 21.14 $\pm$ 0.06 & 19.02 $\pm$ 0.05 & 18.43 $\pm$ 0.07 & 74.33 $\pm$ 0.12 & 66.20 $\pm$ 0.23 & 59.62 $\pm$ 0.24 & & 5.63 $\pm$ 0.05 \\
PLT$_{h^2}$ & 11.68 $\pm$ 0.01 & 12.84 $\pm$ 0.02 & 13.79 $\pm$ 0.02 & 84.31 $\pm$ 0.13 & 72.90 $\pm$ 0.06 & 63.75 $\pm$ 0.04 & \multirow{3}{*}{$\begin{rcases}\\ \\ \\ \end{rcases}$ \textbf{0.28 $\pm$ 0.01}} & 10.91 $\pm$ 0.13 \\
PS-PLT$_{h^2 + A^*}$ & 18.51 $\pm$ 0.02 & 17.61 $\pm$ 0.04 & 18.04 $\pm$ 0.05 & 83.06 $\pm$ 0.07 & 71.19 $\pm$ 0.14 & 62.66 $\pm$ 0.11 & & 33.06 $\pm$ 0.70 \\
PS-PLT$_{h^2 + beam}$ & 18.33 $\pm$ 0.02 & 17.36 $\pm$ 0.04 & 17.65 $\pm$ 0.05 & 83.03 $\pm$ 0.07 & 71.17 $\pm$ 0.14 & 62.58 $\pm$ 0.13 & & \textbf{4.81 $\pm$ 0.21} \\

\specialrule{0.94pt}{0.4ex}{0.65ex}
& \multicolumn{8}{c}{\wikilshtc} \\
\midrule
PLT$_{log}$ & 26.00 $\pm$ 0.08 & 31.93 $\pm$ 0.11 & 35.62 $\pm$ 0.13 & 63.87 $\pm$ 0.19 & 42.25 $\pm$ 0.13 & 31.34 $\pm$ 0.10 & \multirow{3}{*}{$\begin{rcases}\\ \\ \\ \end{rcases}$ 9.21 $\pm$ 0.10} & 4.96 $\pm$ 0.16 \\
PS-PLT$_{log + A^*}$ & 32.84 $\pm$ 0.18 & 36.27 $\pm$ 0.24 & \textbf{39.38 $\pm$ 0.27} & 64.47 $\pm$ 0.43 & \textbf{43.19 $\pm$ 0.29} & \textbf{32.08 $\pm$ 0.21} & & 12.40 $\pm$ 0.74 \\
PS-PLT$_{log + beam}$ & 32.76 $\pm$ 0.18 & 36.09 $\pm$ 0.25 & 39.08 $\pm$ 0.31 & 64.38 $\pm$ 0.44 & 43.05 $\pm$ 0.30 & 31.91 $\pm$ 0.23 & & 1.21 $\pm$ 0.04 \\
PLT$_{h^2}$ & 26.71 $\pm$ 0.08 & 33.14 $\pm$ 0.16 & 37.06 $\pm$ 0.22 & 64.69 $\pm$ 0.18 & 42.95 $\pm$ 0.16 & 31.82 $\pm$ 0.15 & \multirow{3}{*}{$\begin{rcases}\\ \\ \\ \end{rcases}$ \textbf{6.35 $\pm$ 0.10}} & 2.31 $\pm$ 0.06 \\
PS-PLT$_{h^2 + A^*}$ & \textbf{33.16 $\pm$ 0.12} & \textbf{36.39 $\pm$ 0.29} & 38.14 $\pm$ 0.42 & \textbf{65.87 $\pm$ 0.23} & 42.68 $\pm$ 0.27 & 30.46 $\pm$ 0.26 & & 5.00 $\pm$ 0.64 \\
PS-PLT$_{h^2 + beam}$ & 33.11 $\pm$ 0.12 & 36.23 $\pm$ 0.28 & 37.92 $\pm$ 0.39 & 65.82 $\pm$ 0.23 & 42.58 $\pm$ 0.27 & 30.41 $\pm$ 0.25 & & \textbf{1.13 $\pm$ 0.06} \\

\specialrule{0.94pt}{0.4ex}{0.65ex}
& \multicolumn{8}{c}{\wikipedia} \\
\midrule
PLT$_{log}$ & 26.28 $\pm$ 0.09 & 30.93 $\pm$ 0.12 & 34.15 $\pm$ 0.14 & 67.50 $\pm$ 0.27 & 48.26 $\pm$ 0.20 & 37.74 $\pm$ 0.15 & \multirow{3}{*}{$\begin{rcases}\\ \\ \\ \end{rcases}$ 46.17 $\pm$ 0.32} & 26.40 $\pm$ 0.64 \\
PS-PLT$_{log + A^*}$ & \textbf{34.12 $\pm$ 0.10} & \textbf{35.70 $\pm$ 0.12} & \textbf{38.14 $\pm$ 0.13} & 67.53 $\pm$ 0.21 & 48.68 $\pm$ 0.15 & 38.23 $\pm$ 0.12 & & 60.01 $\pm$ 5.58 \\
PS-PLT$_{log + beam}$ & 34.11 $\pm$ 0.11 & 35.65 $\pm$ 0.13 & 38.06 $\pm$ 0.15 & 67.54 $\pm$ 0.23 & 48.63 $\pm$ 0.17 & 38.17 $\pm$ 0.14 & & \textbf{4.81 $\pm$ 0.13} \\
PLT$_{h^2}$ & 26.71 $\pm$ 0.08 & 31.62 $\pm$ 0.14 & 34.91 $\pm$ 0.19 & 68.28 $\pm$ 0.23 & \textbf{49.13 $\pm$ 0.20} & \textbf{38.33 $\pm$ 0.18} & \multirow{3}{*}{$\begin{rcases}\\ \\ \\ \end{rcases}$ \textbf{27.05 $\pm$ 0.19}} & 10.21 $\pm$ 0.07 \\
PS-PLT$_{h^2 + A^*}$ & 33.77 $\pm$ 0.11 & 35.64 $\pm$ 0.21 & 37.37 $\pm$ 0.31 & 68.54 $\pm$ 0.23 & 48.45 $\pm$ 0.26 & 37.03 $\pm$ 0.27 & & 21.20 $\pm$ 4.31 \\
PS-PLT$_{h^2 + beam}$ & 33.75 $\pm$ 0.15 & 35.60 $\pm$ 0.29 & 37.33 $\pm$ 0.41 & \textbf{68.57 $\pm$ 0.32} & 48.44 $\pm$ 0.36 & 37.07 $\pm$ 0.36 & & 4.91 $\pm$ 0.37 \\

\specialrule{0.94pt}{0.4ex}{0.65ex}
& \multicolumn{8}{c}{\amazon} \\
\midrule
PLT$_{log}$ & 26.31 $\pm$ 0.06 & 30.22 $\pm$ 0.08 & 33.83 $\pm$ 0.10 & \textbf{45.01 $\pm$ 0.12} & \textbf{40.21 $\pm$ 0.11} & \textbf{36.72 $\pm$ 0.10} & \multirow{3}{*}{$\begin{rcases}\\ \\ \\ \end{rcases}$ 1.92 $\pm$ 0.01} & 12.06 $\pm$ 0.05 \\
PS-PLT$_{log + A^*}$ & \textbf{31.14 $\pm$ 0.07} & \textbf{33.45 $\pm$ 0.09} & \textbf{35.60 $\pm$ 0.11} & 43.71 $\pm$ 0.10 & 39.72 $\pm$ 0.09 & 36.60 $\pm$ 0.10 & & 20.40 $\pm$ 0.45 \\
PS-PLT$_{log + beam}$ & 30.95 $\pm$ 0.07 & 33.13 $\pm$ 0.11 & 35.14 $\pm$ 0.14 & 43.48 $\pm$ 0.11 & 39.40 $\pm$ 0.11 & 36.21 $\pm$ 0.13 & & 1.57 $\pm$ 0.15 \\
PLT$_{h^2}$ & 26.22 $\pm$ 0.08 & 29.89 $\pm$ 0.12 & 33.12 $\pm$ 0.16 & 44.78 $\pm$ 0.17 & 39.75 $\pm$ 0.16 & 35.97 $\pm$ 0.16 & \multirow{3}{*}{$\begin{rcases}\\ \\ \\ \end{rcases}$ \textbf{1.44 $\pm$ 0.01}} & 4.56 $\pm$ 0.14 \\
PS-PLT$_{h^2 + A^*}$ & 29.92 $\pm$ 0.09 & 32.23 $\pm$ 0.12 & 34.21 $\pm$ 0.17 & 43.57 $\pm$ 0.15 & 38.95 $\pm$ 0.13 & 35.33 $\pm$ 0.16 & & 6.59 $\pm$ 0.04 \\
PS-PLT$_{h^2 + beam}$ & 29.82 $\pm$ 0.09 & 32.02 $\pm$ 0.12 & 33.91 $\pm$ 0.17 & 43.45 $\pm$ 0.15 & 38.75 $\pm$ 0.14 & 35.08 $\pm$ 0.16 & & \textbf{1.17 $\pm$ 0.04} \\

\bottomrule
\end{tabular}

\label{tab:pw-ps-plt-vs-sota}
\end{center}
\end{table*}

\end{document}